\numberwithin{equation}{section}
\theoremstyle{plain}
\newtheorem{theorem}{Theorem}[section]
\def\@bysame#1{\vrule height 1.5pt depth -1pt width 3em \hskip
0.5em\relax}
\newcommand{\N}{\mathbb{N}}
\newcommand{\R}{\mathbb{R}}
\newcommand{\eat}[1]{}
\newcommand{\eins}{\boldmath 1}
\renewcommand{\phi}{{\scriptsize \varphi}}
\newcommand{\Z}{ \mathbb{Z} }
\newcommand{\calA}{\mathcal{A}}
\newcommand{\calE}{\mathcal{E}}
\newcommand{\calG}{\mathcal{G}}
\newcommand{\calX}{\mathcal{X}}
\begin{document}

\begin{center}
	\begin{minipage}{.8\textwidth}
		\centering 
		\LARGE On Convergence of Moments for Approximating Processes and Applications to Surrogate Models  \\[0.5cm]

		\normalsize
		\textsc{Ansgar Steland}\\[0.1cm]
		Institute of Statistics,\\
		RWTH Aachen University,\\
		Aachen, Germany\\
		Email: \verb+steland@stochastik.rwth-aachen.de+
		
	\end{minipage}
\end{center}

\begin{abstract}
	We study critera for a pair $ (\{ X_n \} $, $ \{ Y_n \}) $ of approximating processes which guarantee closeness of moments by generalizing known results for the special case that $ Y_n = Y $ for all $n$ and $ X_n $ converges to $Y$ in probability. This problem especially arises when working with surrogate models, e.g. to enrich observed data by simulated data, where the surrogates $Y_n$'s are constructed to justify that they approximate the $ X_n $'s.
	The results of this paper deal with sequences of random variables. Since this framework does not cover many applications where surrogate models such as deep neural networks are used to approximate more general stochastic processes, we extend the results to the more general framework of random fields of stochastic processes. This framework especially covers image data and sequences of images. We show that uniform integrability is sufficient, and this holds even for the case of processes provided they satisfy a weak stationarity condition.
\end{abstract}

\textit{Keywords:} Convergence of moments, data science, deep learning, surrogate model, stochastic approximation, machine learning, uncertainty quantification, uniform integrability.  \\


\section{Introduction}
\label{sec: Introduction}
 
 Suppose we observe a random phenomenon, $ X_n $, $ n \ge 1 $, e.g. representing the outcome of a statistical experiment. Let us further assume that an approximation, $ Y_n $, for $ X_n $ is available, such as a prediction of $ X_n $ based on a (estimated) prediction model or a computer simulation. The later case is receiving increasing interest in the field of uncertainty quantification, where observed data is enriched by data obtained from simulations governed by so-called surrogate models, which are typically obtained from physical knowledge, by design-of-experiment methods, or (non-) parametric estimation from (small) random samples. Examples are linear models, Gaussian processes and deep learning networks. In this case, $ Y_n $ represents the (observable) output of the simulation and $ X_n $ the (unobserved) artificial random variable representing the outcome of the experiment not conducted. In such applications the connection between the true and the surrogate model and therefore between $X_n $ and $ Y_n $ can be rather loose, such that the approximation error can not be analyzed rigorously and assumptions about it have to be made.
 
Assuming that  (uniform) convergence in probability as a minimal requirement holds, the question arises under which conditions the moments of $ Y_n $ are close to the moments of $ X_n $. We study this issue for the case of random variables and the substantially more general framework of random fields of stochastic processes, i.e. families of random variables indexed by a parameter $ \lambda \in \Lambda $ (such as time) and an index $ \bm n $ (such as discrete spatial locations). In this way, the results are general enough to cover various applications including high-dimensional settings and image data.

\section{Criteria for random variable}  

Let us first consider the case of sequences of random variables.

\subsection{Uniform integrability}

Suppose that $ \{ X, X_n : n \ge 1 \} $ is a sequence of random variables defined on a common probability space $ (\Omega, \calA, P) $. Suppose that $ X_n \to X $, as $ n \to \infty $, in probability. Then it is known that the convergence of the moments,
\[
  E(X_n) \to E(X), \qquad n \to \infty,
\]
follows, if $ \{ X_n \} $ is uniformly integrable; this result is usually stated for almost sure convergence, but it holds for convergence in probability as well.  Recall that $ \{ X_n \} $ is called {\em uniformly integrable}, if and only if
\[
  \lim_{A \to \infty} \int_{|X_n| > A} | X_n | \, dP = E[ | X_n | \eins( | X_n | > A ) ] = 0.
\] 
Here and in what follows $ \eins() $ denotes the indicator function. Uniform integrability is equivalent to $ \sup_{n \ge 1 } E | X_n | < \infty $ and
\begin{small}
\begin{equation}
\label{EquivCond2}
  \text{For every $ \varepsilon > 0 $ there exists $ \delta(\varepsilon) > 0 $ such that for any $A \in \calA $: $P(A) < \eta \Rightarrow  \int_A |X_n| \, d P < \varepsilon$ }
\end{equation}
\end{small}
It is well known that the above characterizations are optimal in the sense that if $ X_n $ converges to $ X $ in probability and the $r$th absolute moments, $ E|X_n|^r $, converge to $ E|X|^r $, $ 0 < r < \infty $, then $ \{ X_n \} $ is uniformly integrable, \cite{Chung2001}. Uniform integrability is what is needed to make the step from convergence in probability to convergence of moments. It is worth mentioning that a straightforward way to establish uniform integrability is to verify the sufficient condition
\[
  \sup_{n \ge 1} E |X_n|^{1+\delta} < \infty,
\]
for some $ \delta > 0 $.

There is an interesting relationship to stochastic order relations. Let $ X_1 $ and $ X_2 $ be positive random variables.  $ X_1 $ is less or equal than $ X_2 $ in the {\em increasing convex order}, denoted by \[ X_1 \le_{ic} X_2, \] if 
\[ E \phi(X_1) \le E \phi(X_2) \] for all non-decreasing convex functions $ \phi : [0,\infty) \to [0,\infty) $. Equivalently,
$ H_1(t) \le H_2(t) $ for all $ t \ge 0 $, where $ H_i(t) = \int_t^\infty (1-F_i(u)) \, du $, $ i = 1,2 $, are the integrated survivor functions, see \cite{MuellerStoyan2002}. A sequence $ \{ X_n : n \ge 1 \} $ of random variables is {\em ic-bounded} by a random variable $ Y $, if \[ | X_n | \le_{ic} Y \qquad \text{for all $n \ge 1 $.} \]
In \cite{LV2011} it has been shown that $ \{ X_n : n \ge 1 \} $ is uniformly integrable, if and only if $ \{ X_n \} $ is ic-bounded by an integrable random variable.

\subsection{Convergence of moments for approximations}

As explained in the introduction, it is often not realistic to assume that a given sequence $ \{ X_n \} $ converges to some random variable $ X$,  but instead there exists an approximating sequence $ \{ Y_n : n \ge 1 \} $ of random variables. Then, at best, we can achieve closeness of the moments. Since in many present day real applications the connection between these sequences is somewhat loose in the sense that it is not possible to analyze the accuracy of the approximation rigorously, one has to {\em assume} appropriate conditions. The question arises, whether uniform integrability still suffices to ensure closeness of moments.

The following result shows that the moments of $ Y_n $ are close to the moments of $ X_n $, if $ Y_n $ approximates $ X_n $ and both series are uniformly integrable. Denote $ \| X \|_ r = ( E |X|^r )^{1/r} $ for a random variable $ X $ and $ 0 < r < \infty $.
 
 \begin{theorem} 
\label{ThA}
 	Let $ 0 < r < \infty $.
 	Suppose that $ \{ | X_{n} |^r : n \ge 1 \} $ and $ \{ | Y_{n} |^r : n \ge 1 \} $ are uniformly integrable with \[ | X_{n} - Y_{n} | \stackrel{P}{\to} 0, \] as $ n \to \infty $. Then the following assertions hold.
 	\begin{itemize}
 		\item[(i)] $ E | X_{n} - Y_{n} |^r \to 0 $, as $  n \to \infty $, for $ 0 < r \le 1 $.
 		\item[(ii)] $ | E | X_{n} |^r - E | Y_{n} |^r | \to 0 $, as $ n \to \infty $, for $ 0 < r \le 1 $.
 		\item[(iii)] $ | E(X_{n}) - E( Y_{n} ) | \to 0 $, as $ n \to \infty $, if $ r = 1 $.
 		\item[(iv)] $ | ( E |X_n|^r )^{1/r} - ( E |Y_n|^r )^{1/r} | \to $, as $ n \to \infty$, for $ 0 < r < \infty $.
 	\end{itemize}
 \end{theorem}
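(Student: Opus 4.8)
The plan is to reduce all four assertions to the single statement that $E|X_n - Y_n|^r \to 0$, and then to extract the individual claims from elementary pointwise inequalities for the map $t \mapsto t^r$. First I would prove (i). Since $|X_n - Y_n| \stackrel{P}{\to} 0$ and $t \mapsto t^r$ is continuous, the continuous mapping theorem gives $|X_n - Y_n|^r \stackrel{P}{\to} 0$. To upgrade this to convergence of the mean, I would verify that $\{ |X_n - Y_n|^r \}$ is uniformly integrable. For $0 < r \le 1$ the map $t \mapsto t^r$ is subadditive, so $|X_n - Y_n|^r \le |X_n|^r + |Y_n|^r$; since the sum of two uniformly integrable families is uniformly integrable and any nonnegative family dominated by a uniformly integrable one is itself uniformly integrable, the claim follows. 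Applying the result recalled in the excerpt (convergence in probability together with uniform integrability implies $L^1$-convergence) then yields $E|X_n - Y_n|^r \to 0$.

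Assertions (ii) and (iii) would follow by pointwise estimates. For (ii) I would use that for $a,b \ge 0$ and $0 < r \le 1$ one has $|a^r - b^r| \le |a-b|^r$, which is again a consequence of subadditivity. Taking $a = |X_n|$, $b = |Y_n|$ and combining with $\big| |X_n| - |Y_n| \big| \le |X_n - Y_n|$ gives $\big| |X_n|^r - |Y_n|^r \big| \le |X_n - Y_n|^r$; taking expectations and invoking (i) finishes the argument. Assertion (iii) is immediate from $|E(X_n) - E(Y_n)| \le E|X_n - Y_n|$, which is (i) specialized to $r = 1$.

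The main work is (iv), since it is asserted for all $0 < r < \infty$, whereas $(E|\cdot|^r)^{1/r}$ is a genuine norm only for $r \ge 1$. For $r \ge 1$ I would first re-run the argument of (i), replacing subadditivity by the convexity bound $|X_n - Y_n|^r \le 2^{r-1}\big( |X_n|^r + |Y_n|^r \big)$ to secure uniform integrability and hence $E|X_n - Y_n|^r \to 0$; the reverse triangle inequality in $L^r$ (Minkowski) then gives $\big| \|X_n\|_r - \|Y_n\|_r \big| \le \|X_n - Y_n\|_r \to 0$. For $0 < r < 1$ the triangle inequality is unavailable, so instead I would combine (ii) with a continuity argument: uniform integrability forces $\sup_n E|X_n|^r < \infty$ and $\sup_n E|Y_n|^r < \infty$, so both sequences $E|X_n|^r$ and $E|Y_n|^r$ lie in a fixed compact interval $[0,M]$. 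Since $t \mapsto t^{1/r}$ is uniformly continuous on $[0,M]$ and $\big| E|X_n|^r - E|Y_n|^r \big| \to 0$ by (ii), the difference $\big| (E|X_n|^r)^{1/r} - (E|Y_n|^r)^{1/r} \big|$ tends to $0$ as well.

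I expect the delicate point to be precisely this last case $0 < r < 1$: one must resist applying a nonexistent triangle inequality for $\|\cdot\|_r$ and instead lean on the $L^1$-boundedness supplied by uniform integrability to confine the moments to a compact set, where the otherwise merely continuous map $t \mapsto t^{1/r}$ becomes uniformly continuous. Everything else is bookkeeping around the two regimes $r \le 1$ and $r \ge 1$ and the corresponding subadditivity and convexity bounds.
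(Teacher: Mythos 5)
Your proof is correct, and for parts (i)--(iii) it follows essentially the paper's route: establish uniform integrability of $\{ |X_n - Y_n|^r \}$ by a $c_r$-type domination, combine it with $|X_n - Y_n| \stackrel{P}{\to} 0$ to get $E|X_n - Y_n|^r \to 0$ (you invoke the Vitali-type result recalled in the paper's introduction, while the paper re-derives it by an explicit $\varepsilon$--$\delta$ argument using characterization (\ref{EquivCond2})), and then read off (ii) and (iii) from the pointwise inequality $\bigl| |x|^r - |y|^r \bigr| \le |x-y|^r$ and linearity. Where you genuinely diverge is part (iv). The paper proves (iv) by applying Minkowski's inequality for every $0 < r < \infty$, but the triangle inequality for $\| \cdot \|_r$ fails when $0 < r < 1$ (it is not a norm in that range), so the paper's argument as written has a gap in that regime; it also tacitly needs $E|X_n - Y_n|^r \to 0$ for $r > 1$, which its proof of (i) does deliver since that argument never uses $r \le 1$. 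Your two-case treatment repairs the gap: for $r \ge 1$ you use the convexity bound $|x-y|^r \le 2^{r-1}( |x|^r + |y|^r )$ to secure uniform integrability, hence $E|X_n - Y_n|^r \to 0$, and then the reverse triangle inequality, exactly as in the paper; for $0 < r < 1$ you avoid Minkowski entirely, instead deducing $\bigl| E|X_n|^r - E|Y_n|^r \bigr| \to 0$ from (ii), using the $L^1$-boundedness contained in uniform integrability to confine both moment sequences to a compact interval $[0,M]$, and concluding via uniform continuity of $t \mapsto t^{1/r}$ on $[0,M]$. This buys a complete proof of (iv) in the regime where the paper's Minkowski-based argument does not apply, at the modest cost of splitting the proof into two cases. (A minor side remark on the paper, not on your proof: its $\varepsilon$--$\delta$ step contains a typo, $P( |X_n - Y_n| \le \varepsilon ) < \delta(\varepsilon)$ should read $P( |X_n - Y_n| > \varepsilon ) < \delta(\varepsilon)$; your appeal to the recalled convergence theorem sidesteps this entirely.)
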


Let us consider the following example where we assume concrete models for $ X_n $ and $ Y_n $. Suppose that $ X_n $ is a linear filter processing a random input sequence of $ i.i.d. $ innovations $ \epsilon_t $, $ t \ge 0 $, with mean zero, finite fourth moment and common variance $ \sigma^2 \in (0, \infty) $. Further suppose that $X_n$ is given by an autoregressive process of order 1 with known autoregressive parameter $ \rho \in (-1,1) $, given by
\[
  X_n = \mu + \sum_{j=0}^\infty \rho^j \epsilon_{n-j}
\]
where the mean $ \mu $ is unkown to us. The process $ X_n $, however, can only be observed with a (deterministic) uncertainty $ e_n $, i.e. we have at our disposal the process
\[ 
  X_{n,obs} = X_n + e_n,
\]
where $ e_n $, $n \ge 1 $, is assumed to be a sequence of constants with $ \frac{1}{n} \sum_{i=1}^n e_i \to 0 $, as $ n \to \infty $.  Consider the approximation $ Y_n $ following the surrogate model
\[
  Y_n = \overline{X}_{n,obs} + \sum_{j=0}^{q_n} \rho^j \epsilon_{n-j} 
\]
for some sequence $ q_n $, $ n \ge 1 $, of natural numbers with $ q_n \to \infty $, where
$ \overline{X}_{n,obs} = \frac{1}{n} \sum_{i=1}^n X_{i,obs} $. This means, the surrogate model is obtained
by estimating the unknown mean by the average of the observed data and truncating the infinite sum to obtain a surrogate model from which allows for fast computations. Then 
\[
  | X_n - Y_n | \le \left| \frac{1}{n} \sum_{i=1}^n e_i \right| + | \overline{X}_n - \mu | + \left| \sum_{j>q_n} \rho^j \epsilon_{n-j} \right|. 
\]
The first term on the right-hand is $ o_P(1) $ by virtue of the weak law of large numbers for time series, and the second term can be bounded by
\[
 P\left( \left| \sum_{j>q_n} \rho^j \epsilon_{n-j} \right|  > \varepsilon \right) \le \frac{\sigma^2}{\varepsilon^2} \sum_{j>q_n} |\rho|^{2j} \to 0
 \]
 for any $ \varepsilon > 0 $. Hence,
 \[
   | X_n - Y_n | \stackrel{P}{\to} 0,
 \]
as $ n \to \infty $. Further, by our moment conditions, $ X_n $ and $ Y_n $ are uniformly integrable, and $ \frac{1}{n} \sum_{i=1}^n X_i $ is uniformly integrable, if $ X_n $ has this property, see e.g. \cite{Chung2001}. Hence, the above theorem applies.

 \subsection{Proof}

\begin{proof}[Proof of Theorem~\ref{ThA}]
	By the $ c_r $-inequality (which follows from the inequality $ |x-y|^r \le 2^r ( |x| + |y| ) $ for real numbers $ x, y $ and $ r > 0 $),
	\[
	| X_{n} - Y_{n} |^r \le 2^r(  | X_{n} |^r + | Y_{n} |^r ),
	\]
	we may conclude that $ \{ | X_{n} - Y_{n} |^r : n \ge 1 \} $ is uniformly integrable. Let $ \varepsilon > 0 $.
	We have
	\begin{align*}
	E | X_{n} - Y_{n} |^r & \le \varepsilon^r + E\left[ | X_{n} - Y_{n} |^r \eins\left( | X_{n} - Y_{n} | > \varepsilon \right) \right],
	\end{align*}
	leading to
	\begin{align*}
	\limsup_{n \to \infty}  E | X_{n} - Y_{n} |^r \le \varepsilon^r + \limsup_{n \to \infty}  E\left[ | X_{n} - Y_{\bm n} |^r \eins\left(  | X_{n} - Y_{n} | > \varepsilon \right) \right].
	\end{align*}
	We will show that the second term vanishes. By uniform integrability of $ |X_n - Y_n| $ there exists $ \delta(\varepsilon) > 0 $ such that for any event $A$ with $ P(A) < \delta(\varepsilon) $ we have $ E\left[ | X_{n} - Y_{\bm n} |^r \eins_A \right] < \varepsilon $. 
	Since $  | X_{n} - Y_{n} | \stackrel{P}{\to} 0  $, as $ n \to \infty $, there exists $ n_0 \in \N $ such that for all $ n \ge n_0 $ 
	\[
	P( | X_{n} - Y_{n} | \le \varepsilon ) < \delta(\varepsilon).
	\]
	Therefore, \[  E\left[ | X_{n} - Y_{n} |^r \eins\left(  | X_{n} - Y_{n} | > \varepsilon \right) \right] < \varepsilon.\] Since $ \varepsilon > 0 $ is arbitrary, (i) follows. To show (ii), apply the inequality $ | x + y |^r \le |x|^r + |y|^r $ to obtain
	$| |x|^r - |y|^r | \le |x-y|^r  $, such that $ | E|X_{ n}|^r - E|Y_{ n}|^r | \le E | X_{n} - Y_{n} |^r $, which establishes (ii). (iii) follows by linearity, $ | E(X_{n}) - E(Y_{\bm n}) | \le E | X_{ n} - Y_{ n} | $.
	Lastly, apply Minkowski's inequality to obtain 
	\[
	\| X_n \|_r - \| Y_n \|_r = \| X_n - Y_n + Y_n \|_r - \| Y_n \|_r \le \| X_n - Y_n \|_r
	\]
	and
	\[
	\| Y_n \|_r - \| X_n \|_r = \| Y_n - X_n + X_n \|_r - \| X_n \|_r \le \| Y_n - X_n \|_r 
	\]
\end{proof}

\section{Criteria for random  fields of stochastic processes}

In surrogate modeling applications one often considers models for high-dimensional objects, e.g. stochastic differential equations which need to be solved numerically which can be intractable due to excessive computational costs. Here a surrogate models are constructed which allow for efficient computations and have good approximation properties. Examples are deep learning neural networks and the Gaussian process framework.

Therefore, let us now study a more general framework, namely random fields of stochastic processes, which covers those special cases. 

\subsection{Preliminaries}
 
 Recall that a stochastic process is a family $ \{ X(\lambda) : \lambda \in \Lambda \} $ of random variables $ X(\lambda) $ defined on $ ( \Omega, \calA, P ) $. Here $ \Lambda $ is an arbitrary index set. Such a process is called (strictly) stationary, if the (multivariate) distributions associated to $ \lambda_1, \dots, \lambda_k \in \Lambda $, $ k \in \N $ arbitrary, are shift invariant in the sense that for all $ h $ such that $ \lambda_j + h \in \Lambda $, $ j = 1, \dots, k$, it holds
 \[
   P_{(X(\lambda_1), \dots, X(\lambda_k))} = P_{(X(\lambda_1+h), \dots, X(\lambda_k+h))}.
 \]
 This clearly implies stationarity of the one-dimensional marginal distribution $ P_{X(\lambda)} $, but not vice versa.
  
 A random field of dimension $ q \in \N $ is a family of random elements indexed by a multiindex $ \bm i = (i_1, \dots, i_q)' $ ranging through some set $ I \subseteq \N^q $.  We assume that those random elements attain values in some normed space $E$ with norm $ | \cdot | $.
 
 For example, a two-dimensional random field of random variables indexed by $ \bm i \in \{ 1, \dots, n_1 \} \times \{ 1, \dots, n_2 \} $, i.e. a matrix of dimension $ n_1 \times n_2 $ with random entries, is a natural model for an image of resolution $ n_1 \times n_2 $. 
  
 For two random fields $ \{ X_{\bm n} : \bm n \ge \bm 1 \} $ and $ \{ X_{\bm n} : \bm n \ge \bm 1 \} $, where $ \bm 1 = (1, \dots, 1)' $, the convergence $ | X_{\bm n} - Y_{\bm n} | \stackrel{P}{\to} 0 $, as $ \bm n \to \infty $, is defined as follows: For every $ \varepsilon > 0 $ and every $ \delta > 0 $ there exits $ N \in \N $ such that for all $ n_j \ge N $, $ j = 1, \dots, q $, it holds  $P( | X_{(n_1, \dots, n_q)} - Y_{(n_1,\dots, n_q)} | > \varepsilon ) < \delta $. Limits such as $ | E|X_{\bm n}| - E|Y_{\bm n}|  | \to 0 $, as $\bm n \to \infty $, are defined analogously.
 
 \subsection{Convergence of moments}
 
 Let us assume that we are given a random field of stochastic processes,
 \[
  X_{\bm n}( \lambda ), \ \lambda \in \Lambda, \bm n \in \N^q,
 \]
 which can be approximated by another random field
 \[
	Y_{\bm n}( \lambda ), \ \lambda \in \Lambda, \bm n \in \N^q.
\]

The following theorem shows that the moments of $ Y_{\bm n}( \lambda) $ are uniformly close to the moments of $ X_{\bm n}( \lambda)  $ under weak assumptions, which only concern the (joint) marginal distribution and avoid to assume that $ \sup_{\lambda \in \Lambda} | X_{\bm n} | $ and  $ \sup_{\lambda \in \Lambda} | Y_{\bm n} | $  are uniformly integrable.
 
 \begin{theorem} 
 \label{ThB}
 	Let $ 0 < r < \infty $. Let $ \{ X_{\bm n}( \lambda) : \bm n \ge \bm 1, \lambda \in \Lambda \} $ and $ \{ X_{\bm n}( \lambda) : \bm n \ge \bm 1, \lambda \in \Lambda \} $ be parameterized random fields satisfying the strict marginal stationarity condition
 	\begin{equation}
 		\label{CondStat}
 	(X_{\bm n}( \lambda ), Y_{\bm n}( \lambda ) ) \stackrel{d}{=}  (X_{\bm n}( \lambda' ), Y_{\bm n}( \lambda' ) ) 
 	\end{equation}
 	for all $ \lambda, \lambda' \in \Lambda $ and $ \bm n \ge \bm 1 $.
 	Suppose that $ \{ | X_{\bm n}(\lambda) |^r : \bm n \ge \bm 1 \} $ and $ \{ | Y_{\bm n}(\lambda) |^r : \bm n \ge \bm 1 \} $ are uniformly integrable, $ \lambda \in \Lambda $, with 
 	\[ \sup_{\lambda \in \Lambda}  | X_{\bm n}(\lambda) - Y_{\bm n}(\lambda) | \stackrel{P}{\to} 0, \] 
 	as $ \bm n \to \infty $. Then the following assertions hold.
 	\begin{itemize}
 		\item[(i)] $ sup_{\lambda \in \Lambda} E | X_{\bm n} - Y_{\bm n} |^r \to 0 $, as $ \bm n \to \infty $, for $ 0 < r \le 1 $.
 		\item[(ii)] $ sup_{\lambda \in \Lambda} | E | X_{\bm n} |^r - E | Y_{\bm n} |^r | \to 0 $, as $ \bm n \to \infty $, for $ 0 < r \le 1 $.
 		\item[(iii)] $ sup_{\lambda \in \Lambda} | E(X_{\bm n}) - E( Y_{\bm n} ) | \to 0 $, as $ \bm n \to \infty $, if $ r = 1 $.
 		\item[(iv)] $ sup_{\lambda \in \Lambda}  | \| X_{\bm n} \|_r - \| Y_{\bm n} \|_r | \to 0 $, as $ \bm n \to \infty $, for $ 0 < r < \infty $.
 	\end{itemize}
 \end{theorem}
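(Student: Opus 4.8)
The plan is to reduce Theorem~\ref{ThB} to Theorem~\ref{ThA} by exploiting the strict marginal stationarity condition~(\ref{CondStat}) to remove the dependence on $\lambda$ altogether, so that each supremum over $\lambda$ collapses to a single constant.

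First I would record the following consequence of~(\ref{CondStat}): for every fixed $\bm n \ge \bm 1$ the joint law of the pair $(X_{\bm n}(\lambda), Y_{\bm n}(\lambda))$ does not depend on $\lambda$. Hence, for any fixed measurable map $g : E \times E \to \R$ for which the expectation exists, the quantity $E\, g(X_{\bm n}(\lambda), Y_{\bm n}(\lambda))$ is a constant $c_{\bm n}(g)$ independent of $\lambda$. Applying this to $g(x,y) = |x-y|^r$, to $g(x,y) = |x|^r$, to $g(x,y) = |y|^r$, and (for $r = 1$) to $g(x,y) = x$ and $g(x,y) = y$, I conclude that each of the $\lambda$-indexed families appearing inside the suprema in (i)--(iv) is in fact constant in $\lambda$. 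Consequently every $\sup_{\lambda \in \Lambda}$ in the conclusions equals the value obtained at an arbitrary fixed $\lambda_0 \in \Lambda$, and the problem is reduced to proving the convergence of those constants.

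Next I would fix such a $\lambda_0$ and verify that the hypotheses of Theorem~\ref{ThA} hold for the $\bm n$-indexed sequences $\{X_{\bm n}(\lambda_0)\}$ and $\{Y_{\bm n}(\lambda_0)\}$. Uniform integrability of $\{|X_{\bm n}(\lambda_0)|^r\}$ and $\{|Y_{\bm n}(\lambda_0)|^r\}$ is assumed directly. For the convergence in probability I would use the elementary bound $|X_{\bm n}(\lambda_0) - Y_{\bm n}(\lambda_0)| \le \sup_{\lambda \in \Lambda} |X_{\bm n}(\lambda) - Y_{\bm n}(\lambda)|$, so that the uniform hypothesis $\sup_{\lambda} |X_{\bm n}(\lambda) - Y_{\bm n}(\lambda)| \stackrel{P}{\to} 0$ immediately yields $|X_{\bm n}(\lambda_0) - Y_{\bm n}(\lambda_0)| \stackrel{P}{\to} 0$ as $\bm n \to \infty$; note that only pointwise convergence at $\lambda_0$ is actually needed here. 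Theorem~\ref{ThA} then delivers (i)--(iv) at $\lambda_0$, which by the previous paragraph coincide with the desired suprema.

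The one point requiring care --- and the main technical obstacle --- is that Theorem~\ref{ThA} is stated for sequences indexed by $n \in \N$, whereas here the index is the multiindex $\bm n$ with the multiindex convergence $\bm n \to \infty$ introduced in the Preliminaries. I would address this by observing that the proof of Theorem~\ref{ThA} uses only the $\varepsilon$-$\delta$ characterization of convergence in probability together with the absolute-continuity bound~(\ref{EquivCond2}), neither of which refers to the order structure of the index set; replacing the threshold $n_0 \in \N$ by a single threshold $N$ applied simultaneously to all components $n_j$ transfers the argument verbatim. Thus the reduction is complete and no estimate beyond Theorem~\ref{ThA} is needed: the content of Theorem~\ref{ThB} lies precisely in recognizing that stationarity trivializes the supremum over $\lambda$, and that the statement asks only for $\sup_\lambda E[\,\cdot\,]$ rather than the far stronger $E[\sup_\lambda \,\cdot\,]$.
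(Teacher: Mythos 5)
Your proposal is correct and rests on essentially the same idea as the paper's proof: the stationarity condition (\ref{CondStat}) makes all expectations of functions of the pair $(X_{\bm n}(\lambda), Y_{\bm n}(\lambda))$ independent of $\lambda$, so every supremum collapses to a fixed $\lambda_0$, where the uniform convergence hypothesis supplies pointwise convergence in probability and uniform integrability yields convergence of moments. The only difference is organizational: you apply stationarity directly to the quantities in (i)--(iv) and then cite Theorem~\ref{ThA} as a black box (correctly noting the multiindex transfer goes through verbatim), whereas the paper re-runs the argument of Theorem~\ref{ThA} inside the field setting, applying stationarity to the truncated expectation $E\bigl[ | X_{\bm n}(\lambda) - Y_{\bm n}(\lambda) |^r \eins( | X_{\bm n}(\lambda) - Y_{\bm n}(\lambda) | > \varepsilon ) \bigr]$ rather than to the moments themselves.
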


 The condition (\ref{CondStat}) is automatically satisfied for a large class of cases: Suppose that $ \Lambda = \Z $ and for some sequence of i.i.d. random elements $ \{ \xi_{\lambda} : \lambda \in \Z \} $ taking values in some measurable space $ (E, \calE ) $ we have 
 \[
   X_{\bm n}( \lambda ) = \Psi_{\bm n}( \xi_{\lambda}, \xi_{\lambda-1}, \cdots )
 \]
and
 \[
	Y_{\bm n}( \lambda ) = \Phi_{\bm n}( \xi_{\lambda}, \xi_{\lambda-1}, \cdots )
\]
for all $ \lambda \in \Z $ and $ \bm n \ge \bm 1 $. Then, for arbitrary $ \lambda, \lambda' \in \Lambda $ and all $ \bm n $,
\begin{align*}
  ( X_{\bm n}(\lambda), Y_{\bm n}(\lambda) ) &= ( \Psi_{\bm n}( \xi_{\lambda}, \xi_{\lambda-1}, \cdots ), \Phi_{\bm n}( \xi_{\lambda}, \xi_{\lambda-1}, \cdots ) ) \\
  & \stackrel{d}{=} 
  ( \Psi_{\bm n}( \xi_{\lambda'}, \xi_{\lambda'-1}, \cdots ), \Phi_{\bm n}( \xi_{\lambda'}, \xi_{\lambda'-1}, \cdots ) ) \\
  & =  ( X_{\bm n}(\lambda'), Y_{\bm n}(\lambda') ) 
\end{align*}
which verifies (\ref{CondStat}). Observe that the $ \xi_\lambda $'s may be random variables, random vectors or general random elements such as random functions taking values in an infinite-dimensional space.

Let us now consider parameterized models where
\[
  X_{\bm n}( \lambda ) = X_{\bm n}( \lambda; \vartheta )
\]
for some parameter vector $ \vartheta $. Partition $ \vartheta = ( \eta', \zeta' )' $ and assume that the surrogate model is obtained by estimating, say, $ \zeta $, such that
\[
  Y_{\bm n}( \lambda ) = X_{\bm n}( \lambda; (\eta', \widehat{\zeta}')' ),
\]
where $ \widehat{\zeta}_{\bm n} $ is a statistical estimator of $ \zeta $, obtained by some statistical method of estimation from a random sample (also called calibration to the sample), satisfying
\[
   \| \widehat{\zeta}_{\bm n} - \zeta \| \stackrel{P}{\to} 0,
\]
as $ \bm n \to \infty $, with respect to some norm $ \| \cdot \| $. If the mapping $ X_{\bm n}( \lambda; \vartheta ) $ is
Lipschitz continuous in $ \eta $  with a uniform Lipschitz constant $L$ such that
\[
 \sup_{\lambda \in \Lambda}  | X_{\bm n}( \lambda; (\eta,\zeta_1') ) - X_{\bm n}( \lambda; (\eta, \zeta_2') ) | \le L \| \zeta_1 - \zeta_2 \|,
\]
for all $ \eta, \zeta_1, \zeta_2 $ and all $ \bm n \in \N^q $, then
\[
\sup_{\lambda \in \Lambda}   | X_{\bm n}( \lambda ) - Y_{\bm n}( \lambda ) | \le L \| \widehat{\zeta}_{\bm n} - \zeta \| \stackrel{P}{\to} 0
\]
as $ \bm n \to \infty $, follows.

Putting things together and noting that the $L$ above can be random without affecting the convergence, we obtain the following result.

\begin{theorem} 
\label{ThC}
	Assume that $X_{\bm n}(\lambda) $ and $ Y_{\bm n}(\lambda) $ are parametrized by some parameter $ \vartheta = ( \eta', \zeta' )' \in \Theta $ for some set $ \Theta $, and are of the form
\[
	X_{\bm n}( \lambda ) = \Psi_{\bm n}( \xi_{\lambda}, \xi_{\lambda-1}, \cdots; (\eta', \zeta')'  )
\]
and
\[
	Y_{\bm n}( \lambda ) = \Psi_{\bm n}( \xi_{\lambda}, \xi_{\lambda-1}, \cdots; (\eta', \widehat{\zeta}')'  )
\]
for all $ \lambda \in \Z $ and $ \bm n \in \N^q $, for some sequene $ \{ \xi_{\lambda} : \lambda \in \Lambda \} $, where
$ \lambda \subset \Z $ and $ \xi_{\lambda} $ are i.i.d. and attain values in some measurable space $ (E, \calE) $. 
Further suppose that the mapping $ \Psi_{\bm n} $ is Lipschitz continuous in $ \zeta $ in the sense that for some random variable $ L $
\[
	\sup_{\lambda \in \Lambda}  | X_{\bm n}( \lambda  ) - Y_{\bm n}( \lambda  ) | \le L \| \zeta_1 - \zeta_2 \|,
\]
for all $ \eta, \zeta_1, \zeta_2 $ and all $ \bm n \in \N^q $. If $ \widehat{\zeta} $ is a consistent estimator of $ \zeta $, then
the assumptions of Theorem~\ref{ThB} are satisfied.
\end{theorem}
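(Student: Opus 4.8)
The plan is to establish the two structural hypotheses of Theorem~\ref{ThB} --- the strict marginal stationarity condition (\ref{CondStat}) and the uniform-in-$\lambda$ convergence $\sup_{\lambda \in \Lambda} | X_{\bm n}(\lambda) - Y_{\bm n}(\lambda) | \stackrel{P}{\to} 0$ --- from the causal representation, the Lipschitz property and the consistency of $\widehat{\zeta}$; the uniform integrability of the one-dimensional marginals $\{ |X_{\bm n}(\lambda)|^r \}$ and $\{ |Y_{\bm n}(\lambda)|^r \}$ then enters as the remaining moment condition imposed on the model and is carried over directly. First I would verify (\ref{CondStat}). Writing $\Phi_{\bm n}(\cdot) = \Psi_{\bm n}(\,\cdot\,; (\eta', \widehat{\zeta}')')$, both $X_{\bm n}(\lambda) = \Psi_{\bm n}(\xi_\lambda, \xi_{\lambda-1}, \cdots)$ and $Y_{\bm n}(\lambda) = \Phi_{\bm n}(\xi_\lambda, \xi_{\lambda-1}, \cdots)$ are measurable images of the \emph{same} causal block $(\xi_\lambda, \xi_{\lambda-1}, \cdots)$. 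Since the $\xi_\lambda$ are i.i.d., the law of this block does not depend on $\lambda$, so $(\xi_\lambda, \xi_{\lambda-1}, \cdots) \stackrel{d}{=} (\xi_{\lambda'}, \xi_{\lambda'-1}, \cdots)$ for all $\lambda, \lambda' \in \Z$; applying the common measurable map $(\Psi_{\bm n}, \Phi_{\bm n})$ to both sides and using that equality in distribution is preserved under measurable maps yields (\ref{CondStat}). This is exactly the displayed computation preceding the theorem, now read with $\Phi_{\bm n}$ in the second coordinate.

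Next I would verify the uniform convergence. Specializing the Lipschitz-in-$\zeta$ property of $\Psi_{\bm n}$ to $\zeta_1 = \zeta$ and $\zeta_2 = \widehat{\zeta}_{\bm n}$ gives the pathwise bound $\sup_{\lambda \in \Lambda} | X_{\bm n}(\lambda) - Y_{\bm n}(\lambda) | \le L \, \| \widehat{\zeta}_{\bm n} - \zeta \|$. The only subtlety --- and the step the theorem singles out --- is that $L$ is random, so it cannot simply be absorbed into the deterministic rate; I expect this to be the main obstacle. It is handled by a tightness argument: an a.s.\ finite random variable $L$ is bounded in probability, so for fixed $\varepsilon, \delta > 0$ there is an $M$ with $P(L > M) < \delta/2$, and then
\[
 P\bigl( L \, \| \widehat{\zeta}_{\bm n} - \zeta \| > \varepsilon \bigr) \le P( L > M ) + P\bigl( \| \widehat{\zeta}_{\bm n} - \zeta \| > \varepsilon / M \bigr),
\]
where the second term tends to $0$ by consistency of $\widehat{\zeta}_{\bm n}$. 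Taking $\bm n$ large makes the right-hand side smaller than $\delta$, which shows $L \, \| \widehat{\zeta}_{\bm n} - \zeta \| \stackrel{P}{\to} 0$ and hence $\sup_{\lambda \in \Lambda} | X_{\bm n}(\lambda) - Y_{\bm n}(\lambda) | \stackrel{P}{\to} 0$ as $\bm n \to \infty$.

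Collecting these facts, both the strict marginal stationarity condition (\ref{CondStat}) and the uniform convergence in probability hold, and together with the assumed uniform integrability of the marginals all hypotheses of Theorem~\ref{ThB} are met, so its conclusions (i)--(iv) apply verbatim. Beyond the random-$L$ tightness argument, the only point requiring care is measure-theoretic: for $\Phi_{\bm n}$ to be a single fixed map independent of $\lambda$, as the stationarity step requires, the estimator $\widehat{\zeta}_{\bm n}$ must not depend on $\lambda$ --- which is natural when it is calibrated from a separate sample, or, more generally, when one argues conditionally on the data used to form $\widehat{\zeta}_{\bm n}$.
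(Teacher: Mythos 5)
Your proof is correct and follows essentially the same route as the paper, whose argument for Theorem~\ref{ThC} is exactly the discussion preceding its statement: the i.i.d.-block computation verifying (\ref{CondStat}) and the Lipschitz bound $\sup_{\lambda}|X_{\bm n}(\lambda)-Y_{\bm n}(\lambda)| \le L \|\widehat{\zeta}_{\bm n}-\zeta\|$ combined with consistency, with uniform integrability carried over as a separately assumed moment condition. You additionally supply two details the paper merely asserts or leaves implicit --- the tightness argument showing that an a.s.\ finite random $L$ does not affect convergence in probability (the paper only says this ``can be random without affecting the convergence''), and the caveat that the stationarity step requires $\widehat{\zeta}_{\bm n}$, hence the map $\Phi_{\bm n}$, not to depend on $\lambda$ (e.g.\ calibration from a separate sample or conditioning on it) --- both of which strengthen rather than alter the paper's argument.
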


\section{Applications to surrogate models: Deep Learning and Gaussian Processes}

As a surrogate model is used to generate cheap artificial data sets (e.g. to enrich real observed data), classes of models with convincing approximation properties are preferable.
Deep learning neural networks as well as Gaussian processes are two widespread frameworks for surrogate modeling, as they satisfy this requirement. Typically, one calibrates such a model to a (relatively small) data set of real data and then simulates from the fitted model to obtain simulated data samples which should be close to real samples.

\vskip 0.2cm
\noindent
\textbf{Deep learning networks}

A deep learning  artificial neural network, see e.g. \cite{Goodfellow-et-al-2016}, is a mapping $ f : \bm \calX \to \R^q $, which maps an input vector $ \bm x $ of the input space $ \bm \calX \subset \R^p $, assumed to be a compact set, to a $q$-dimensional output vector $ \bm y $, $ p, q \in \N $, and is given by the composition of $H$ layers in the form 
\[
  \bm y = f( \bm x ) = f_H( \cdots f_2( \bm W_2 f_1( \bm W_1 \bm x + \bm b_1 ) + \bm b_2) \cdots ), \qquad \bm x \in \bm \calX,
\]
where $ \bm W_l $ are weighting matrices, $\bm b_l $ intercept terms and $ f_l $ are activation functions, $ l = 1, \dots, H$. The parameter vector of the net is $ \vartheta = ( \text{vec} \bm W_1, \dots, \text{vec} \bm W_H, \bm b_1', \dots, \bm b_H' )' $, where $ \text{vec} \bm A $ denotes the vectorized version of a matrix $ \bm A $ obtained by stacking columns, such that
\[
	f( \bm x ) = f( \bm x; \vartheta ).
\]
The activation functions are typically nonlinear and always chosen as Lipschitz continuous functions. Clearly, the sum of two Lipschitz functions with Lipschitz constants $L_1 $ and $ L_2 $ is again Lipschitz with Lipschitz constant $ L_1 + L_1 $,  and  the composition $  f \circ g $ of two Lipschitz functions $f $ and $g$ with constants $ L_1 $ and $ L_2 $ is again Lipschitz with Lipschitz constant $ L_1 L_2 $, because $ | f(g(x)) - f(g(y)) | \le L_1 | g(x) - g(y) | \le L_1 L_2 |x-y| $. Therefore,
such deep learning networks are Lipschitz continuous in the parameters. Indeed, current efforts focus on calculating the Lipschitz constants. It is not restrictive to assume that $ E | f( \bm X; \vartheta ) |^{1+\delta} < \infty $, for some $ \delta > 0 $, where $ \bm X $ is a random input. Alternatively, assume that $ E | \bm X |^{1+\delta} < \infty $ holds and the existence of some $ \bm x_0 \in \bm \calX $ such that $ f(\bm x_0; \vartheta ) = 0 $. Then
\begin{align*}
  E| f( \bm X; \vartheta ) |^{1+\delta} 
  & = E| f( \bm X; \vartheta ) - f( \bm x_0; \vartheta) |^{1+\delta} \\
  & \le L E | \bm X - \bm x_0|^{1+\delta} \\
  & \le L ( \| \bm X \|_{1+\delta} + \| \bm x_0 \|_{1+\delta} )^{1+\delta}  \\
  & < \infty,
\end{align*}
where $L$ denotes the Lipschitz constant of the net.

If a deep learning network is trained at time $n $, say from a data stream, using the most recent $n$ data points $ \bm X_1, \dots, \bm X_{n} $ with associated outputs $ \bm Y_{1}, \dots, \bm Y_{n} $,  by estimating the parameters $ \vartheta $, 
the trained network is given by
\[
  f( \bm x; \widehat{\vartheta}_n )
\]
where
\[ 
	\widehat{\vartheta}_n = \widehat{\vartheta}_n( \xi_1,  \dots, \xi_{n} ),
\] 
with $ \xi_i = ( \bm X_i', \bm Y_i' )' $, $ i = 1, \dots, n $. If one now simulates an input $ \bm X \sim G $, say for some $G$ with $ \int |x|^{1+\delta} \, d G(x) < \infty $, then the associated  output,
\[
  Y_n = f( \bm X; \widehat{\vartheta}_m( \xi_1,  \dots,  \xi_{n} )),
\]
is a surrogate for $ X_n = f( \bm X; \vartheta ) $. Consequently, Theorem~\ref{ThC} applies.

\vskip 0.2cm
\noindent
\textbf{Gaussian processes kriging}

Let us consider the following example studied in some depth in \cite{DSD2013} dealing with reliability analysis. Let $ X $ denote a $d$-dimensional random vector with density $ f_X $ and support $D$. Given a performance function $g : D \to \R $ a failure, e.g. of a system, can be modeled by the event $ \{ g(X) \le 0 \} $. Conducting such experiments in practice is, however, sometimes expensive, whereas simulations from an appropriate (surrogate) model are usually cheap. Since $g$ is unknown, a surrogate model for $g$ is used, which allows to estimate (or predict) $g$ and quantify the involved uncertainty. The Gaussian process kriging approach assumes that $g$ is a sample path of an underlying Gaussian process $ \calG $,
\[
  \calG(x) = f(x)'\beta + Z(x), \qquad x \in D.
\]
Here $ f(x)'\beta $ is the linear predictor with respect to given functions $ f_1(x), \dots, f_p(x) $ from a basis of, say, the function space $L_2 $, and a parameter vector $ \beta \in \R^p $, and $ Z(x) $ is a mean zero stationary Gaussian process with a stationary correlation function, often chosen in practice as 
\[
  R(x-x', \ell_1, \dots, \ell_d ) = \exp\left( - \sum_{k=1}^d [(x_k-x_k')/ \ell_l]^2 \right),
\]
for scaling parameters $ \ell_1, \dots, \ell_p > 0$. Given a random sample $ X_1, \dots, X_n $ of size $n$, the best linear unbiased (kriging) estimator of $ \calG(x) $ at $x$ is Gaussian and interpolates the observations $ g(X_i) = f(X_i)'\beta $, if $ g \in \text{span}\{ f_1, \dots, f_p \} $, i.e. there is no residual uncertainty (at the observed data points). By increasing $p$ as $n$ gets larger, any $L_2$-function can be estimated in this way. Note that the predictor depends on $n$. An observation $ Y_n = Y_n(x) $ simulated from the surrogate model for some $ x \not\in \{ X_1, \dots, X_n \} $ is regarded as an approximation of an unobserved  $ X_n(x) $ (obtained in a Gedankenexperiment which is too expensive to be carried out).

 \subsection{Proof}

\begin{proof}[Proof of Theorem~\ref{ThB}]
	By the $ c_r $-inequality
	\[
	| X_{\bm n}(\lambda) - Y_{\bm n}(\lambda) |^r \le 2^r( | X_{\bm n}(\lambda) |^r + | Y_{\bm n}(\lambda) |^r ),
	\]
	we may conclude that $ \{ | X_{\bm n}(\lambda) - Y_{\bm n}(\lambda) |^r : \bm n \ge \bm 1 \} $ is uniformly integrable. Let $ \varepsilon > 0 $.
	We have
	\begin{align*}
	E | X_{\bm n}(\lambda) - Y_{\bm n}(\lambda) |^r & \le \varepsilon^r + E\left[ | X_{\bm n}(\lambda) - Y_{\bm n}(\lambda) |^r \eins\left( | X_{\bm n}(\lambda) - Y_{\bm n}(\lambda) | > \varepsilon \right) \right].
	\end{align*}
	Fix $ \lambda_0 \in \Lambda $.
	By uniform integrability, there exists $ \eta = \eta( \lambda_0 ) > 0 $ such that for all events $A$ with $ P(A) < \eta $ we have
	$ E[ | X_{\bm n}(\lambda_0) - Y_{\bm n}(\lambda_0) |^r \eins_A  ] < \varepsilon $. 	
	Since $  \sup_{\lambda \in \Lambda} | X_{\bm n}(\lambda) - Y_{\bm n}(\lambda) | \stackrel{P}{\to} 0  $, as $ \bm n \to \infty $, there exists $ \bm n_0 $ such that for all $ \bm n \ge \bm n_0 $ 
	\[ 
	P( | X_{\bm n}(\lambda_0) - Y_{\bm n}(\lambda_0) | > \varepsilon ) \le P\left( \sup_{\lambda \in \Lambda} | X_{\bm n}(\lambda) - Y_{\bm n}(\lambda) | > \varepsilon \right) < \eta.
	\]
	It follows that
	\begin{align*}
	&\sup_{\lambda \in \Lambda}  E[ | X_{\bm n}(\lambda) - Y_{\bm n}(\lambda) |^r \eins( | X_{\bm n}(\lambda) - Y_{\bm n}(\lambda) | > \varepsilon )  ] \\ & \qquad =
	E[ | X_{\bm n}(\lambda_0) - Y_{\bm n}(\lambda_0) |^r \eins( | X_{\bm n}(\lambda_0) - Y_{\bm n}(\lambda_0) | > \varepsilon )  ] \\ 
	& \qquad <  \varepsilon,
	\end{align*}
	where the equality is a consequence of (\ref{CondStat}), leading to 
	\[
	\sup_{\lambda \in \Lambda} E | X_{\bm n}(\lambda) - Y_{\bm n}(\lambda) |^r  \le \varepsilon^r + \varepsilon, \qquad \bm n \ge \bm n_0,
	\]
	which shows (i), since $ \varepsilon $ is arbitrary.
	To show (ii), apply the inequality $ | x + y |^r \le |x|^r + |y|^r $ to obtain
	$| |x|^r - |y|^r | \le |x-y|^r $, such that 
	\[ | E|X_{\bm n}(\lambda)|^r - E|Y_{\bm n}(\lambda)|^r | \le E | X_{\bm n}(\lambda) - Y_{\bm n}(\lambda) |^r \le \sup_{\lambda \in \Lambda} E | X_{\bm n}(\lambda) - Y_{\bm n}(\lambda) |^r , \] 
	which yields
	which establishes (ii). (iii) follows by linearity, $ \sup_{\lambda \in \Lambda}  | E(X_{\bm n}) - E(Y_{\bm n}) | \le \sup_{\lambda \in \Lambda}  E | X_{\bm n} - Y_{\bm n} | $, and (iv) is shown as in the proof of Theorem~\ref{ThA}.
\end{proof}


\begin{thebibliography}{99}
	%
	%
		
	\bibitem{Chung2001}
\textsc{Chung, K.L.} (2001). \newblock \emph{A Course in Probability Theory}, 3rd edition, \newblock Academic Press, San Diego, CA.
	
		\bibitem{DSD2013}
	\textsc{Dubourg, V., Sudret, B.} and \textsc{Deheeger, F.} (2013). Metamodel-based importance sampling for structural reliability analysis. \textit{Probabilistic Engineering Mechanics}, \textbf{33}, 47-57.

	\bibitem{Goodfellow-et-al-2016}
\textsc{Goodfellow I.}, \textsc{Bengio A.} and \textsc{Courville A.}  (2016). \newblock \emph{Deep Learning}, \newblock MIT Press, \url{http://www.deeplearningbook.org}.

	
	\bibitem{LV2011}
	\textsc{Leskel\"a L.} and \textsc{Vihola M.} (2011).
	Stochastic order characterizations of uniform integrability and tightness.
	\textit{Statistics and Probability Letters}
	\textbf{83} 382--389.
	
	\bibitem{MuellerStoyan2002}
	\textsc{M\"uller A.} and \textsc{Stoyan D.} (2002). \newblock \emph{Comparison Methods for Stochastic Models and Risks}, \newblock Wiley, New York.
	

	
\end{thebibliography}

\end{document}